\DeclareMathOperator*{\argmax}{arg\,max}
\newtheorem{assumption}{Assumption}
\newtheorem{theorem}{Theorem}
\def\eqref#1{equation~\ref{#1}}
\def\1{\bm{1}}
\DeclareMathAlphabet{\mathsfit}{\encodingdefault}{\sfdefault}{m}{sl}
\SetMathAlphabet{\mathsfit}{bold}{\encodingdefault}{\sfdefault}{bx}{n}
\NewDocumentCommand{\norm}{sm}{\IfBooleanTF{#1}{\|#2\|}{\left\| #2 \right\|}}
\newcommand{\E}{\mathbb{E}}
\title{\LARGE \bf
Robust-Sub-Gaussian Model Predictive Control for Safe Ultrasound-Image-Guided Robotic Spinal Surgery
}
\author{Yunke Ao$^{1,2,3}$, Manish Prajapat $^{2,3}$, Yarden As$^{2,3}$, Yassine Taoudi-Benchekroun$^{4}$, Fabio Carrillo$^{5}$, \\ Hooman Esfandiari$^{1}$, Benjamin F. Grewe$^{4}$, Andreas Krause$^{2,3}$
 and Philipp Fürnstahl$^{1}$%
\thanks{$^{1}$ Yunke Ao, Hooman Esfandiari and Philipp Fürnstahl are with the ROCS team at Balgrist University Hospital and University of Zurich, Forchstrasse 340, 8008 Zürich, Switzerland
{\tt\small yunke.ao@balgrist.ch}}%
\thanks{$^{2}$Yunke Ao, Manish Prajapat, Yarden As and Andreas Krause are with the Department of Computer Science, ETH Zurich, Universitätstrasse 6, 8092 Zürich, Switzerland}%
\thanks{$^{3}$ Yunke Ao, Manish Prajapat, Yarden As and Andreas Krause are with TH AI Center, ETH Zurich, Andreasstrasse 5, 8092 Zürich, Switzerland}
\thanks{$^{4}$ Yassine Taoudi-Benchekroun and Benjamin Grewe are with the Institute of Neuroinformatics, ETH Zurich, Winterthurerstrasse 190, 8057 Zürich, Switzerland}
\thanks{$^{5}$ Fabio Carrillo is with OR-X Translational Center for Surgery, Balgrist University Hospital, University of Zurich, Zürich, Switzerland }
}%
\begin{document}

\maketitle
\thispagestyle{empty}
\pagestyle{empty}

\begin{abstract}
    Safety-critical control using high-dimensional sensory feedback from optical data (e.g., images, point clouds) poses significant challenges in domains like autonomous driving and robotic surgery. Control can rely on low-dimensional states estimated from high-dimensional data. However, the estimation errors often follow complex, unknown distributions that standard probabilistic models fail to capture, making formal safety guarantees challenging. In this work, we introduce a novel characterization of these general estimation errors using sub-Gaussian noise with bounded mean. We develop a new technique for uncertainty propagation of proposed noise characterization in linear systems, which combines robust set-based methods with the propagation of sub-Gaussian variance proxies. We further develop a Model Predictive Control (MPC) framework that provides closed-loop safety guarantees for linear systems under the proposed noise assumption. We apply this MPC approach in an ultrasound-image-guided robotic spinal surgery pipeline, which contains deep-learning-based semantic segmentation, image-based registration, high-level optimization-based planning, and low-level robotic control. To validate the pipeline, we developed a realistic simulation environment integrating real human anatomy, robot dynamics, efficient ultrasound simulation, as well as in-vivo data of breathing motion and drilling force. Evaluation results in simulation demonstrate the potential of our approach for solving complex image-guided robotic surgery task while ensuring safety.
\end{abstract}

\section{INTRODUCTION}

Robotic-assisted technologies have recently emerged as a promising solution to enhance the accuracy and safety of surgery procedures, especially for spinal surgery \cite{1_abe2011novel, 18_lieberman2006bone, 17_lieberman2020robotic}. 
However, current state-of-the-art robotic-assisted spinal surgery systems heavily depend on markers on the bones for patient registration \cite{7_farber2021robotics, 6_d2019robotic}.
The use of optical markers is limited by line-of-sight constraints and may cause additional invasiveness to the patient anatomy.
To eliminate reliance on markers, researchers have explored integrating intraoperative data such as ultrasound (US) \cite{li2023robot,li2024ultrasound}, fluoroscopy \cite{12_jecklin2022x23d,jecklin2024domain}, and RGB-depth (RGB-D) cameras \cite{19_liebmann2021spinedepth}. 
However, the high dimensionality and noise associated with these data sources necessitate advanced planning and control frameworks to ensure safe and accurate robotic surgery.

\begin{figure}[t]
\centering
\includegraphics[width=0.45\textwidth]{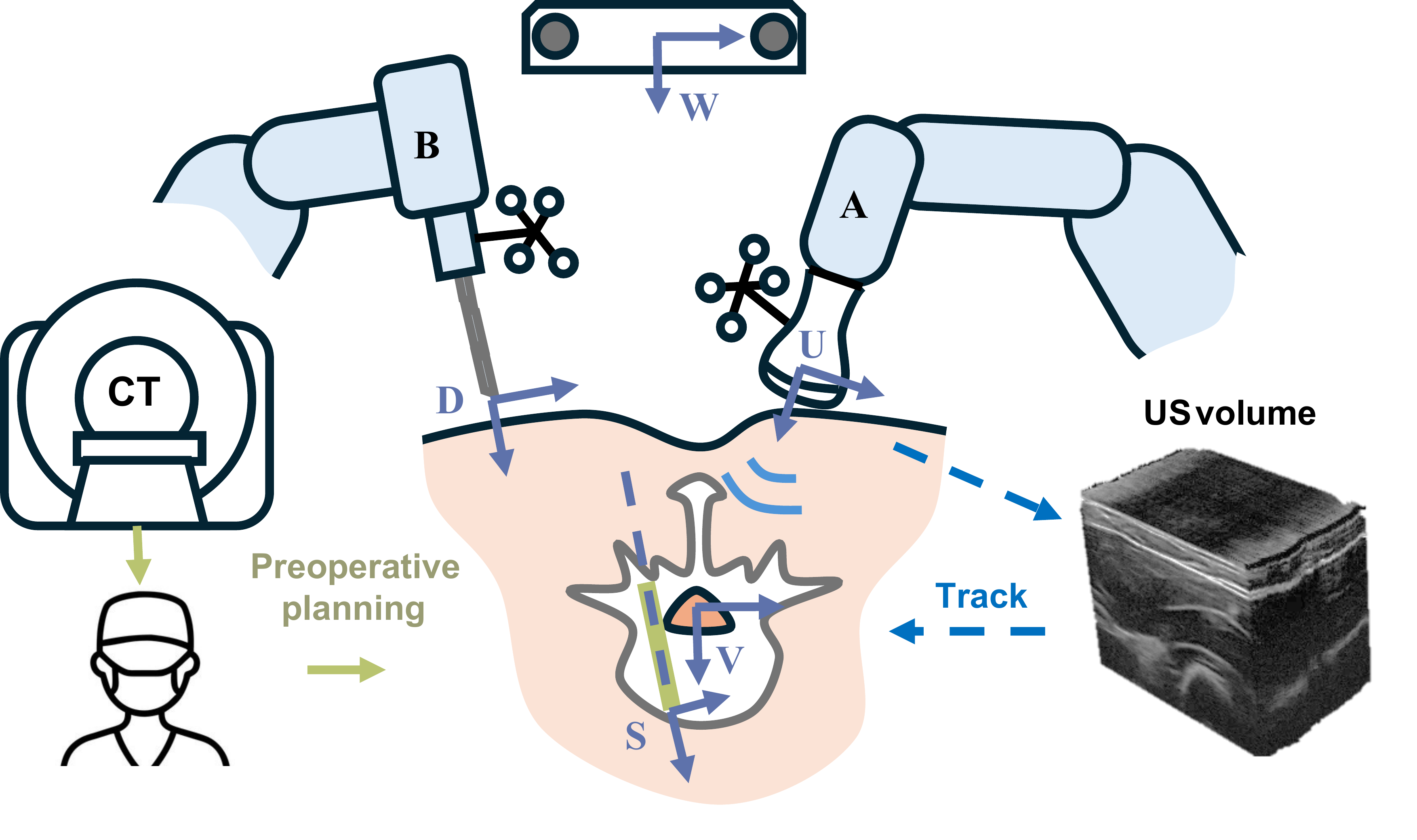}
\caption{Ultrasound (US)-image-guided robotic spinal surgery setup.
Preoperative planning is performed based on the CT scan of the patient, which gives the goal of drilling (yellow green).
The ultrasound, drill, optical camera, and desired placement frames are denoted by $U$, $D$, $W$ and $S$.
}
\label{fig:sys}
\end{figure}

Two notable approaches to address the wider problem of complex control are deep reinforcement learning (DRL) and model predictive control (MPC), each suited to different scenarios based on the needs of the application rather than being direct alternatives.
In particular, DRL has also been applied to safe intraoperative surgical planning for procedures such as Pedicle Screw Placement (PSP), demonstrating higher safety rates than traditional methods in simulations \cite{ao2024saferplan}. 
However, DRL approaches often lack explainability and safety guarantees, which are crucial for applications such as robotic surgery.

In contrast to DRL, MPC methods offer safety guarantees under specific noise conditions \cite{mayne2006robust,hewing2020recursively}, however, limited work addressed safety guarantees for general nonlinear or image observations.
Recently proposed sub-Gaussian stochastic MPC \cite{subgau_mpc} provides closed-loop guarantees for sub-Gaussian noises, which can capture zero-mean state estimation errors originating from image observations.
Nonetheless, real-world intraoperative modalities such as ultrasound can introduce non-zero mean noises for state estimation, presenting additional challenges for control with guaranteed safety.
These constraints have significantly limited the applications of MPC and DRL in real-world image-guided surgery and orthopedics.

In this work, we addressed existing challenges by proposing an MPC framework with \emph{closed-loop guarantees} under general \emph{sub-Gaussian} noise with \emph{bounded mean}, which is applied to US-guided robotic spinal surgery. 
Specifically, our pipeline includes a novel uncertainty propagation method that separately characterizes \emph{biases and variance} of noise, employing robust set propagation techniques for the former and sub-Gaussian approaches for the latter.
To validate our framework, we construct a realistic simulation for US-guided PSP based on Orbit \cite{mittal2023orbit}, incorporating the ITIS human model dataset \cite{gosselin2014development}, efficient US simulation \cite{salehi2015patient}, as well as breathing motion and drilling force data from existing in-vivo experiments conducted on porcine models. 
Evaluation results in this simulation demonstrate the potential of our approach for enhancing safety in robotic surgery and similar vision-based control problems.

\textit{Notation:}
We use $\|x\|_V$ to denote $\sqrt{x^\top Vx}$ for $x\in \mathbb{R}^n$ and $V\in \mathbb{R}^{n\times n}\succeq 0 $. 
We denote the probability of an event $E$ by $\mathrm{Pr}\{E\}$.
We denote the indicator function by $\mathbb{I}(\cdot)$, where $\mathbb{I}(E)=1$ if the event $E$ is true and $0$ otherwise.
We use $\mathbb{E}$ to denote the expectation.
We denote the natural number set by $\mathbb{N}$.
We denote Minkowski sum between sets by $\oplus$.
We use $_A^BT \in \mathrm{SE}(3)$ to denote the rigid transformation from frame $B$ to frame $A$.
We use $\mathcal{K}_\infty$ to denote the set of continuous functions $\alpha: \mathbb{R}_{\geq0} \rightarrow \mathbb{R}_{\geq0}$ which are strictly increasing, unbounded, and satisfy $\alpha({0})=0$.

\section{PROBLEM DESCRIPTION}

\subsection{System Description}
\label{sec:sys}

Our system for ultrasound-guided robotic spinal surgery follows the existing two-robot arm setup in \cite{Li:TMRB:2024} as is shown in \Cref{fig:sys}. 
The surgical procedure begins with a preoperative CT scan, providing the target 3D bone model ($\mathcal{B}$) and enabling surgeons to plan the desired drilling position on the bone model. 
During surgery, our setup (shown in \Cref{fig:sys}) includes an optical tracking camera (world frame, $\{W\}$), a 3D ultrasound probe ($\{U\}$) held by a robot arm (robot A), a surgical drill ($\{D\}$) held by a second robot arm (robot B), the target vertebra ($\{V\}$) on the patient, and the target drilling tip pose ($\{S\}$).
The target drilling pose in the vertebra frame $^V_ST$ is provided by preoperative planning.
Optical tracking gives $_U^WT_t$ and $_D^WT_t$.
The transformation $_V^U T_t$
are not known and are thereby estimated from the US volume images $I$.
We consider using 3D US probe in our setting since 2D US lacks enough freedom to track the real-time 3D pose of the target bone, which can possibily move during the surgery.

We assume that the target vertebra exhibits an \emph{unknown periodic motion due to breathing}, and that the drill may experience \emph{random force disturbances} from the surrounding soft tissues and the bone. 
We thus mount a force sensor on the end-effector of robot B to quantify these external forces $f_e$.
Our system aims to accurately drill along the preplanned path towards $S$ guided by $I$, ensuring precise and safe drilling following the preoperative plan.
The safety requirements (shown in \Cref{fig:model} (a)) include (i) limiting lateral breaches to less than 2 mm outside the pedicle region and (ii) preventing any breakage through the cortical layer into soft tissues.
In the following, we model these requirements analytically and formulate the task space planning problem.

\begin{figure}[t]
\centering
\vspace{0.5em}
\includegraphics[width=0.45\textwidth]{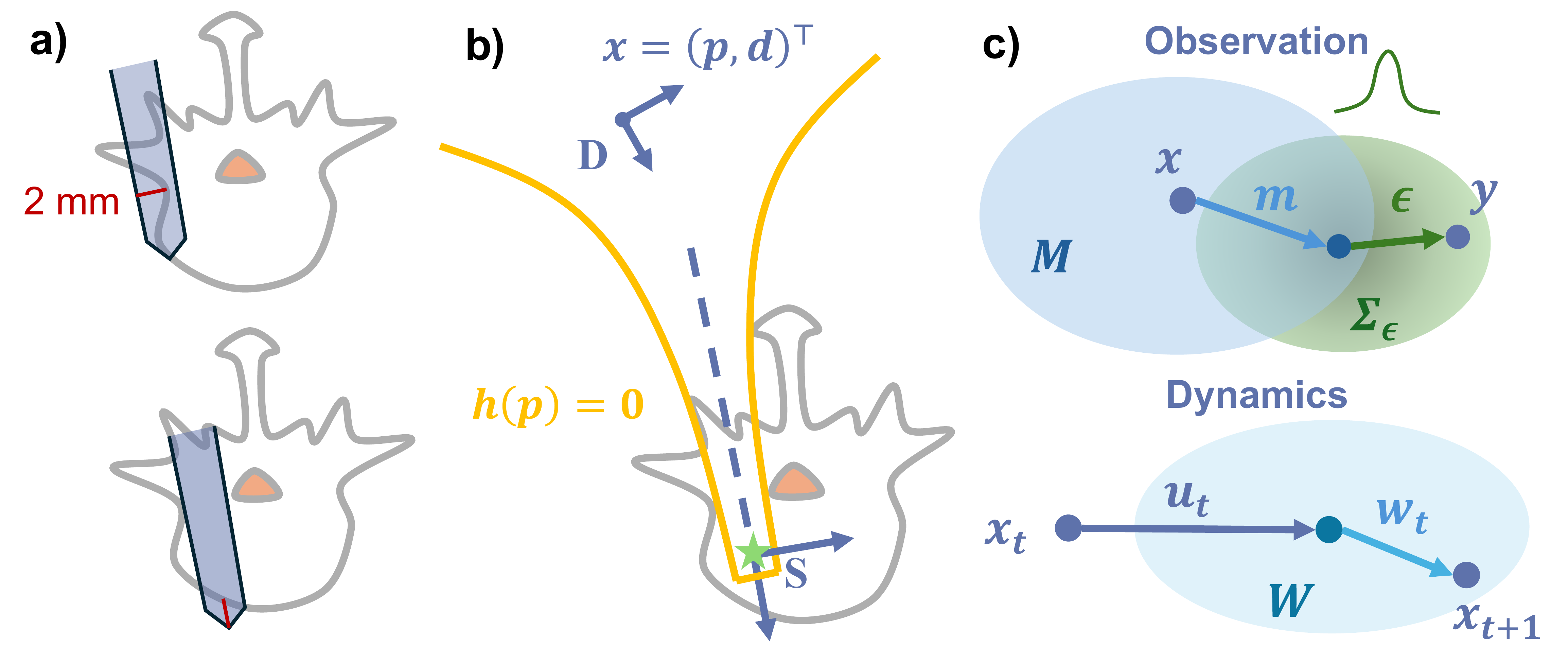}
\caption{Modeling of the task. (a) Lateral breaches (top) and break-through (bottom); (b) simplified constraints; (c)  observation model and dynamics.
}
\label{fig:model}
\end{figure}

\subsection{Task Space Planning Problem Formulation}
\label{sec:model}

\textbf{State} The state $x_t:=[\bm{p}_t; \bm{d}_t]\in \mathbb{R}^5$ represents the pose of the drill tip in the $S$ frame, including position $\bm{p}:=[p_x; p_y; p_z]\in \mathbb{R}^3$ and spherical coordinates of the direction $\bm{d}:=[\theta; \phi]\in \mathbb{R}^2$. 
The state $x_t$ can be expressed by
$x_t = f(_D^ST_t) = f\left({{_V^ST}}\cdot {{_U^VT}_t}\cdot {{_W^UT}_t}\cdot {_D^WT_t}\right)$,
where $f$ is the mapping from the transformation matrix to the pose.
The ground truth state $x_t$ is unknown because ${{_U^VT}_t}$ is unknown.

\textbf{Measurement} We define the measurement $y_t$ as the estimation of $x_t$ from the ultrasound image $I$ (containing information on ${{_U^VT}_t}$):
\begin{align}
    y_t = x_t + m_t + \epsilon_t,~m_t \in \mathcal{M},~\epsilon_t\sim \mathcal{SG}(0, \Sigma_\epsilon), 
    \label{eq:obs}
\end{align}
where $\epsilon_t,m_t$ are bias and random components of the estimation error, as shown in \Cref{fig:model} (c).
We use $\mathcal{SG}(0, \Sigma)$ to denote zero-mean sub-Gaussian distributions with variance proxy $\Sigma\succ 0$ \cite{subgau_mpc}.
The set $\mathcal{M}$ is bounded.
Although both $\epsilon_t,m_t$ can depend on $x_t$, $\Sigma_\epsilon$ and $\mathcal{M}$ can be chosen to be uniformly hold for all $x_t$.
The approach to obtain $y_t$ from $I$ will be detailed in Section~\ref{sec:us_seg} and \ref{sec:image_reg}.

\textbf{Input and dynamics} Our high-level planning input $u$ is defined as $\Delta x$, which represents the end-effector command sent to the low-level controller of robot B for tracking. 
We consider a discrete-time linear dynamics (shown in \Cref{fig:model} (c)) given by:
\begin{subequations}
    \begin{align} 
    & x_0 \sim \mathcal{SG}(\mu_0, \sigma_0), \\
    &x_{t+1} = x_t + u_t + w_t,~w_t\in \mathcal{W},  
\end{align} \label{eq:dyn}
\end{subequations}
where $\mu_0$ and $\sigma_0$ are the prior mean and variance proxy of the initial state.
The disturbance $w_t$ is the tracking error originating from (1) unknown vertebra movement caused by breathing and (2) tracking errors from the low-level robot controller.
The bounded set $\mathcal{W}$ always contains the error $w_t$.

\begin{figure*}[t]
\centering
\vspace{0.5em}
\includegraphics[width=0.9\textwidth,height=0.34\textwidth]{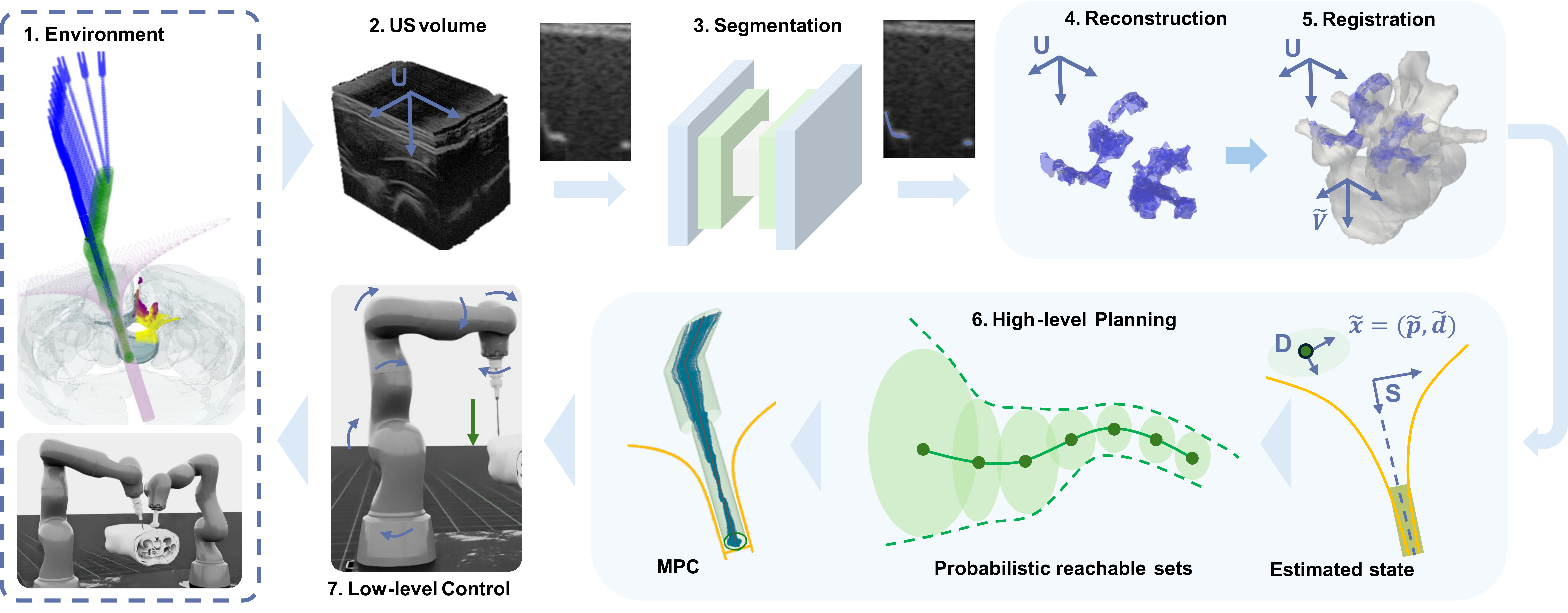}
\caption{Overview of our ultrasound-image-guided planning and control approach.
Real-time ultrasound (US) volume is captured by the transducer and processed into 2D slices.
These US images are segmented by a U-Net to obtain bone surfaces.
We then reconstruct the 3D bone surface from 2D segmentations and register the surface to the CT bone model.
From the registration, we estimate the state, i.e., the relative pose between drill tip and the desired screw placement.
Probabilistic reachable sets (PRS) are then computed for the future states, which are utilized for chance constraint satisfaction with MPC.
The output commands from MPC are finally executed by the low-level controller of the robot that holds the drilling end-effector (robot B).
}
\label{fig:overview}
\end{figure*}

\textbf{Constraints} Our safety requirement (\Cref{sec:sys}) includes lateral breaches of less than 2 mm, which depends on the shape of the bone and is challenging to model analytically.
Therefore, we simplify it as a funnel-like position constraint, which is wide outside but narrow inside the patient body, as illustrated in \Cref{fig:model} (yellow).
This constraint for the drill tip is expressed as: \begin{align} 
    h(\bm{p}) = \left(\sqrt{\frac{p_y^2}{c_y^2} + \frac{p_z^2}{c_z^2}} + c_1\right)^2 - \exp{\left(-\frac{p_x}{c_x} - c_2\right)} \leq 0 \label{ineq:funnel_cons}
\end{align} where $p_x, p_y, p_z$ are coordinates of $\bm{p}$, $c_x$, $c_y$, $c_z$ are scale factors, and $c_1$ and $c_2$ are constants that control the radius and shift of the funnel, respectively.

We add the same position constraint for the point $(\bm{p} + \bm{l})$, which indirectly constrains the drilling direction: 
\begin{align} 
    h(\bm{p} + \bm{l}) \leq 0 \label{ineq:drct_cons}
\end{align}
where $\bm{l}$ is a vector along the drilling direction with the same length as the screw.
We additionally impose a constraint to prevent breakthrough: 
\begin{align} 
    p_x \leq 0 \label{ineq:break_cons}
\end{align}
On the other hand, our input $u_t$ satisfies:
\begin{align}
    \|u_t\| \leq \bar{u} \label{ineq:input_bound}
\end{align}
where $\bar{u}$ is the bound of input.
The final safe sets $\mathcal{X},\mathcal{U}$ for $x_t,u_t$ and the corresponding chance constraints can thus be summarized as: 
\begin{equation}
    \begin{split}
   & \mathcal{X}:=\{x\,\,|\,\,\text{Ineq.}(\ref{ineq:funnel_cons}),~\text{Ineq.}(\ref{ineq:drct_cons}),~\text{Ineq.}(\ref{ineq:break_cons})\}, \\
   & \mathcal{U}:=\{u\,\,|\,\,\text{Ineq.}(\ref{ineq:input_bound})\}, \\
    &P\left([x_t;u_t] \in \mathcal{X}\times\mathcal{U}\right) \geq 1 - \delta,~\forall~t\in \mathbb{N},
\end{split} \label{ineq:cons}
\end{equation}
where $1-\delta$ is the user-specified desired safe probability, $\mathbb{N}$ is the natural number set.

\textbf{Optimization Problem} We consider the following overall stochastic optimal control problem with chance constraints:
\begin{subequations}
    \begin{align}
    & \min_{\pi_{0:\infty}} \lim_{\tau\rightarrow\infty}\frac{1}{\tau}\sum_{t=0}^{\tau}c(x_t, u_t) \label{eq:cost}\\
    s.t. & ~\text{Equ.}(\ref{eq:obs}),~\text{Equ.}(\ref{eq:dyn}),~\text{Equ.}(\ref{ineq:cons}),~\forall t\in \mathbb{N}, \\
    & u_t = \pi_t(y_{0:t}, u_{0:t-1}),~\forall t\in \mathbb{N}, 
\end{align}\label{eq:soc}
\end{subequations}
where $\mu_0$ and $\sigma_x$ are given, $\pi_{0:T-1}$ is the sequence of control laws.
The cost function $c(x_t, u_t)$ is defined as a standard Linear-Quadratic-Regulator (LQR) cost:
\begin{align}
    c(x_t, u_t) := \frac{1}{2}\|x_t\|_Q^2+ \frac{1}{2}\|u_t\|_R^2,
\end{align}
where $Q$ and $R$ are weight matrices.
In Section~\ref{sec:method}, we will introduce a novel approach to solve the optimization problem described by~\ref{eq:soc}.

\section{IMAGE-BASED PLANNING AND CONTROL}
\label{sec:method}

The overview of our image-based MPC framework is shown in \Cref{fig:overview}.
Given a US volume image $I$, we first segment the vertebra surface with a deep neural network (\Cref{sec:us_seg}) .
The state is estimated by registration between the segmented vertebra surface ($^U\mathcal{B}_t$) and the CT bone model $^V\mathcal{B}$ (\Cref{sec:image_reg}).
Then we solve the optimization problem~(\ref{eq:soc}) using a novel MPC approach that handles sub-Gaussian noise with bounded bias (mean) in \Cref{sec:MPC}.

\subsection{Ultrasound Image Segmentation}
\label{sec:us_seg}

We fine-tuned a pre-trained Feature Pyramid Network (FPN) \cite{FPN_kirillov2017unified}, as implemented in \cite{segmentation_github}, with a ResNet encoder backbone \cite{resnet}, to segment the bone surface ($^U\hat{\mathcal{B}}_t$) from ultrasound images. To train the model, we synthetically generated 800 bone ultrasound images, each accompanied by a binary mask delineating the bone surface from the rest of the bone structure and background (bone surface labeled as 1, background and other structures as 0). We applied standard image augmentation techniques (rotation, mirroring, Gaussian noise, and rescaling) to 600 of these images, producing a total of 9,600 images, of which 10\% were set aside for validation. The remaining 200 images were reserved for testing and were not augmented. After fine-tuning, the model achieved an Intersection over Union (IoU) of $0.52$ on the test set.

\subsection{Pose Estimation with Image-based Registration}
\label{sec:image_reg}
We estimate the transformation $_U^VT_t$ by aligning the CT bone model $^V\mathcal{B}$ to the segmented bone surface $^U\hat{\mathcal{B}}_t$.
Specifically, we search for the transformation $_U^VT^*_t$ that maximizes the volume of intersection between the transformed bone model and the segmented bone surface:
$_U^VT^*_t = \argmax_{T} |^U\hat{\mathcal{B}}_t \cap (T\cdot{^V\mathcal{B}})| \label{eq:reg}$,
where surface models $^V\mathcal{B}_t$ and $^U\hat{\mathcal{B}}_t$ are regarded as sets in 3D space, $T \cdot{^V\mathcal{B}}$ represent transformed $^V\mathcal{B}_t$ by $T$, and $|\cdot|$ denotes the volume of the set.
Given $_U^VT^*_t$, the resulting estimated state $y_t$ can be computed by
\begin{align}
    y_t = f(_D^S\hat{T}_t)=f(_V^ST \cdot {_U^VT^*_t} \cdot {_U^WT_t} \cdot 
 {^W_DT_t}),
    \label{eq:per-step-se}
\end{align}
as is mentioned in \Cref{sec:model} (\textbf{State}).
We assume that the estimation error $y_t - x_t$ has bounded bias and sub-Gaussian randomness, which yields the measurement model \labelcref{eq:obs}.

\subsection{Model Predictive Control with Closed-loop Guarantees}
\label{sec:MPC}

In this section, we propose an MPC framework that explicitly accounts for image registration errors, modeled as sub-Gaussian noise with a bounded mean.
We first design a closed-loop linear control system following the indirect observation feedback MPC approach~\cite{hewing2020recursively}.
We then construct ellipsoid probabilistic reachable sets (PRS) under the non-zero sub-Gaussian noise.
By ensuring all states in PRS satisfy the constraints $\mathcal{X}\times \mathcal{U}$, our MPC provides closed-loop guarantees for the chance constraints~(\ref{ineq:cons}).

\textbf{Closed-loop system} 
Similar to \cite{hewing2020recursively}, we consider nominal and estimated states $z_t, \hat{x}_t$ propagated as:
\begin{subequations}
    \begin{align}
    z_{t+1} &= z_t + v_t \label{eq:nominal}\\
    \hat{x}_{t+1} &= \hat{x}_{t} + u_t + L(y_{t+1} - \hat{x}_{t} - u_t)
    \label{eq:kalman} \\
     u_t &= K(\hat{x}_t - z_t) + v_t,
\end{align} \label{eq:est_track}
\end{subequations} where $L,K$ are the Kalman gain and feedback designed offline with LQG or user-specified values.
The initial nominal state is $z_0:=\mu_0$ and $v_t$ is the nominal input.

\textbf{Biased sub-Gaussian error propagation} We define the estimation error as $\hat{e}_t:=\hat{x}_t - x_t$, and the tracking error as $\bar{e}_t:=x_t - z_t$.
Then according to \labelcref{eq:obs}, \labelcref{eq:dyn}, and \labelcref{eq:est_track}, the total error $e_t:=[\hat{e}_t; \bar{e}_t]$ propagates as:
 \begin{align}
     e_{t+1} = A^e e_t + B^e_1 w_t + B^e_2m_{t+1} + B^e_3\epsilon_{t+1},\label{eq:err_dyn}
\end{align}
\begin{align*}
    A^e:= &\begin{bmatrix}
        I - LC & 0 \\
        - K & I + K
    \end{bmatrix}, \\
    B^e_1:= &\begin{bmatrix}
        I - LC \\
        I
    \end{bmatrix},
    B^e_2 =B^e_3:= \begin{bmatrix}
         -L\\
        0
    \end{bmatrix},
\end{align*}
 with properly designed $K,L$ to ensure $A^e$ Schur stable. 
 We divide $e_t$ into bias ($e_t^b$) and stochastic ($e_t^s$) components, then we can decompose \labelcref{eq:err_dyn} into:
 \begin{subequations}
     \begin{align}
        e_{t+1}^s &= A^e e_t^s + B^e_3\epsilon_{t+1}, \label{eq:err_s}\\
         e_{t+1}^b &= A^e e_t^b + B^e_1 w_t + B^e_2m_{t+1}, \label{eq:err_r}\\
         e_{t+1} &= e_{t+1}^b + e_{t+1}^s,
         \label{eq:err_sep}
 \end{align}
 \end{subequations}
where \labelcref{eq:err_r} only contains bounded terms $w_t,m_{t+1}$, and \labelcref{eq:err_s} only has sub-Gaussian term $\epsilon_{t+1}$.

 \textbf{Uncertainty propagation} We now consider the uncertainty propagation for bias and stochastic terms respectively, as shown in \Cref{fig:theory} (left).
 According to \cite[Thm. 1]{subgau_mpc}, the variance proxy of $e_{t}^s$ (denoted as $\Sigma_t$) propagates similar to covariance propagation. Then from \labelcref{eq:err_s} we have:
 \begin{align*}
     \Sigma_{t+1} = A^e\Sigma_t{A^e}^\top + \sigma_\epsilon^2 B^e_3{B^e_3}^\top,
 \end{align*}
 where $\Sigma_{0}=\sigma_0^2 I$.
 On the other hand, a bound set $\mathcal{F}_t^b$ guaranteeing $e^b_t\in \mathcal{F}^b_{t}$ can be propagated based on \labelcref{eq:err_r}:
 \begin{align*}
     \mathcal{F}^b_{t+1} = A^e \mathcal{F}^b_{t} \oplus B^e_1 \mathcal{W} \oplus B^e_2 \mathcal{M},
 \end{align*}
 where $\mathcal{F}^b_{0}:=\emptyset$.

 \begin{figure}[t]
\centering
\vspace{0.99em}
\includegraphics[width=0.45\textwidth]{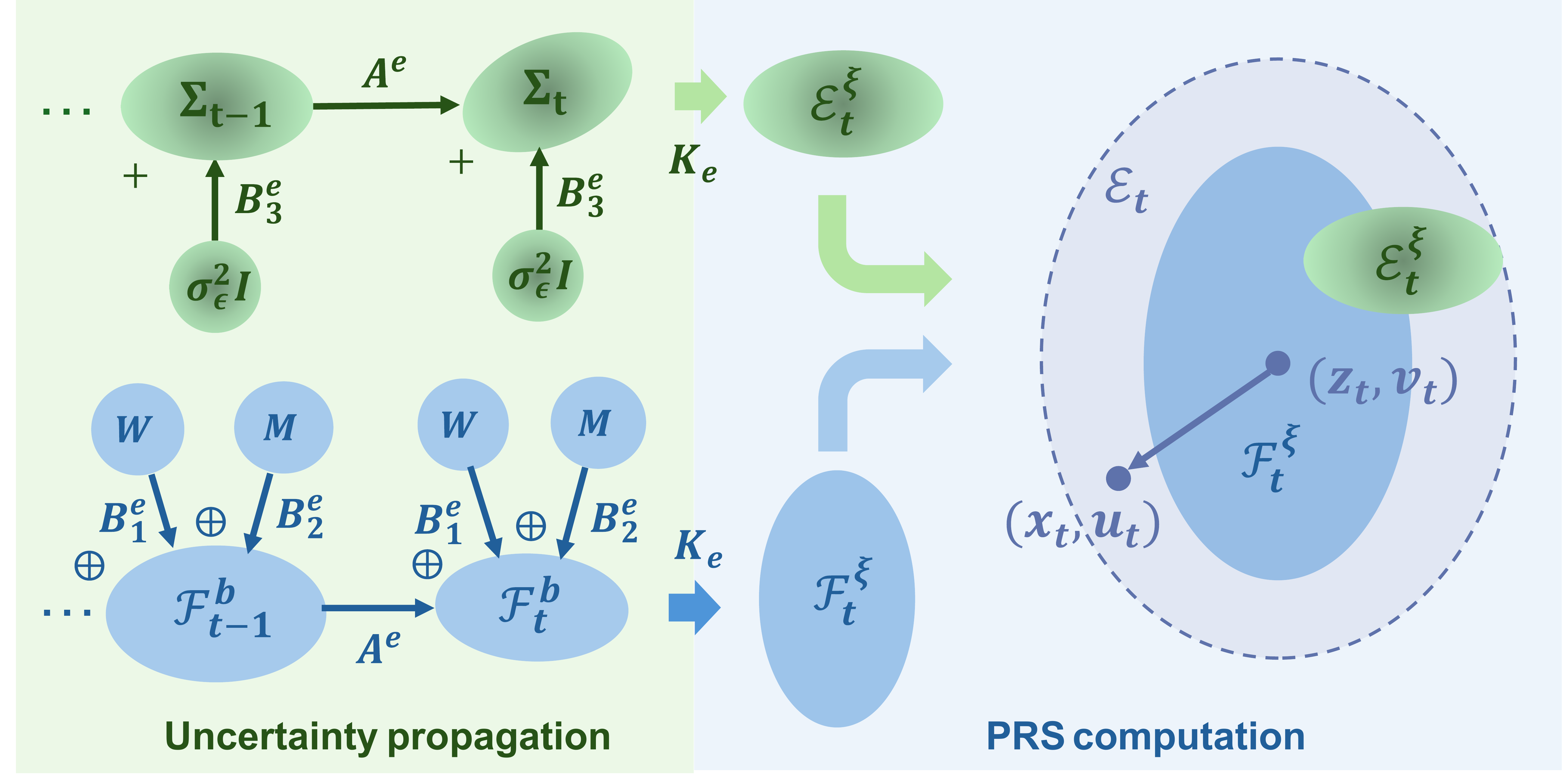}
\caption{Uncertainty propagation (left) and probabilistic reachable sets (PRS) computation (right).
Green and blue ellipsoids illustrate the propagation of the stochastic and bias components of the tracking errors, respectively.
}
\label{fig:theory}
\end{figure}

\textbf{Probabilistic reachable sets (PRS)} 
PRS are sets that include the true states and inputs with greater probability than the user-specified value $1-\delta$.
To compute PRS, we define a stacked error variable $$\xi_t:=\begin{bmatrix}
     x_t - z_t \\
     u_t-v_t
 \end{bmatrix}=\begin{bmatrix}
 0 & I \\
 K & K
 \end{bmatrix}e_t=:K_e e_t,$$ with its bias and stochastic terms denoted as $\xi_t^b:=K_e e^b_t$ and $\xi_t^s:=K_ee^s_t$ respectively.
 Then according to \cite[Thm. 1,2]{subgau_mpc}, a confidence set ($\mathcal{E}^\xi_t$) with $\mathrm{Pr}\{\xi_t^s\in \mathcal{E}_t^\xi\}\geq 1-\delta$ for the sub-Gaussian variable $\xi_t^s$ can be computed from $\Sigma_t$:
 \begin{align*}
     \mathcal{E}^\xi_t:=\{\xi ~|~ \|\xi\|^2_{(K_e\Sigma_t K_e^\top)^{-1} } \leq n_c + n_c \kappa^{-1}(\delta^{-\frac{2}{n_c}})\}, 
 \end{align*} 
 where $\kappa(x):=\frac{e^x}{1+x}$, $n_c$ is the number of dimension of $\mathcal{X}\times \mathcal{U}$. 
 On the other hand, a bounded set ($\mathcal{F}^\xi_t$) for the bias term $\xi_t^b$ with $\xi_t^b\in\mathcal{F}^\xi_t$ can be obtained as $\mathcal{F}^\xi_t := K_e\mathcal{F}^b_{t}$.
 Then by denoting the total error confidence set
 $\mathcal{E}_t := \mathcal{E}^\xi_{t} \oplus \mathcal{F}^\xi_t$, we have $\mathrm{Pr}\{\xi_t \in \mathcal{E}_t\} \geq 1 - \delta$ and thus $\mathrm{Pr}\{[x_t;u_t] \in [z_t;v_t]\oplus\mathcal{E}_t\} \geq 1 - \delta$.
Therefore, $[z_t;v_t]\oplus\mathcal{E}_t$ are valid PRS, as shown in \Cref{fig:theory} (right).
The chance constraints \labelcref{ineq:cons} can be guaranteed by
 $
 [z_t; v_t] \in (\mathcal{X} \times \mathcal{U})\ominus \mathcal{E}_t.
 $

\begin{figure*}[t]
\centering
\vspace{0.5em}
\includegraphics[width=0.9\textwidth]{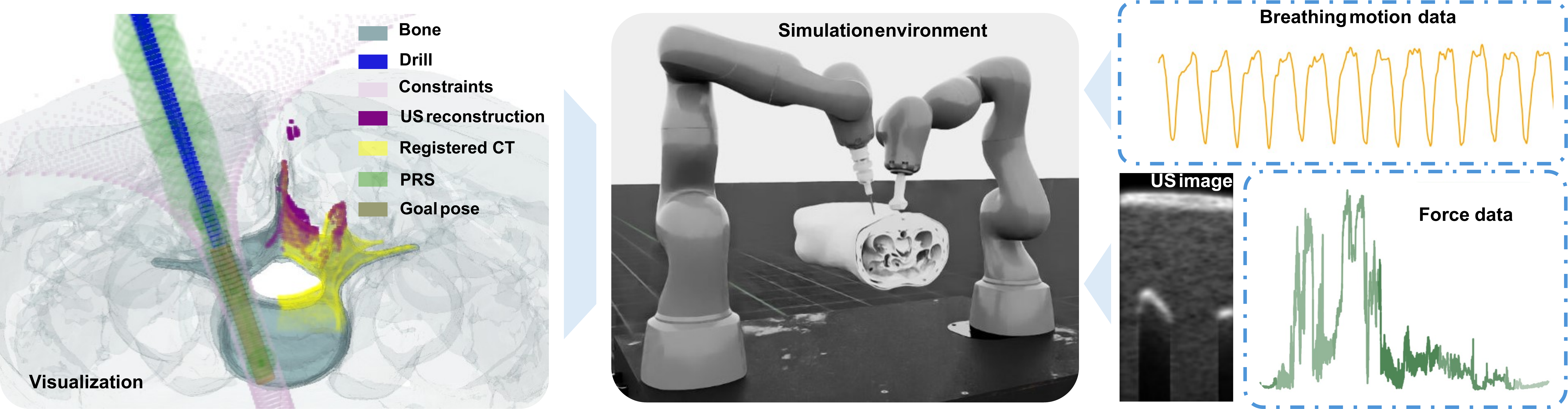}
\caption{Overview of our simulation environment.
The scene in the orbit simulation contains 2 robot arms and a patient human model.
A transparent visualization is shown on the left, with the bone, drill, constraints and probabilistic reachable sets (PRS) colored gray, blue, pink and green respectively.
An example of simulated ultrasound images is shown on the right.
Breathing motion and force data from in vivo experiments are incorporated into the simulation. 
}
\label{fig:sim}
\end{figure*}

\textbf{MPC problem} Following \cite{muntwiler2023lqg}, the constrained optimization problem we solve at each time step is:
\begin{subequations}\label{prob:mpc}
    \begin{align}
        \min_{v_{0:H-1|t}} & c_f(\bar{x}_{H|t}) + \sum_{i=0}^{H-1} c\left(\bar{x}_{i|t}, u_{i|t}\right) \label{eq:mpc_cost}\\ 
        \mathrm{s.t.}~ & \forall \,i\in\{0,...,H-1\}:\\
        & z_{i+1|t} = z_{i|t} + v_{i|t}, \label{eq:z_prop}\\
        & u_{i|t} = v_{i|t} + K(\bar{x}_{i|t} - z_{i|t}), \\
        & \bar{x}_{i+1|t} = \bar{x}_{i|t} + u_{i|t}, \label{eq:x_prop}\\
        &[z_{i|t};v_{i|t}]\in (\mathcal{X}\times\mathcal{U}) \ominus \mathcal{E}_{t+i}, \label{eq:mpc_cons}\\
        & z_{H|t}\in \mathcal{Z}_f, \\
        &\bar{x}_{0|t}=\hat{x}_t,\label{eq:x_init}\\
        &z_{0|t}=z_t,\label{eq:z_init}
    \end{align}
\end{subequations}
where $\bar{x}_{i|t},z_{i|t}$ represent the certainty equivalent states and nominal states at the planned $i$th step from time $t$.
The terminal cost $c_f(x):=\frac{1}{2}\|x\|_P^2$, with $P$ solved by the Lyapunov equation given $A,B,K$.
This problem minimizes the finite horizon cost of the predicted mean trajectory $\bar{x}_{0:H|t},u_{0:H-1|t}$, while guaranteeing the chance constraints satisfaction of the true states $x_{t:t+H}$ through steering the nominal trajectory $z_{0:H|t}$. 
The terminal set $\mathcal{Z}_f$ is designed to satisfy the invariance properties:
\begin{assumption}[Terminal set \cite{subgau_mpc}]
\label{ass:terminal}
    The terminal set $\mathcal{Z}_f$ satisfies for all $z\in \mathcal{Z}_f,t\in\mathbb{N}$: 
    \begin{enumerate}
        \item (Positive invariance) $(A + BK)z\in \mathcal{Z}_f$;
        \item (Constraints satisfaction) $[z,Kz]\in (\mathcal{X}\times \mathcal{U})\ominus \mathcal{E}_t$.
    \end{enumerate}
\end{assumption}

We denote the optimal solution of Problem \labelcref{prob:mpc} as $v^*_{0:H-1|t}$.
The resulting closed-loop system is : 
\begin{align}
    u_t = u^*_{0|t} = K(\hat{x}_t - z_t) + v^*_{0|t},~\text{Equ.}\labelcref{eq:nominal},~\text{Equ.}\labelcref{eq:kalman}\label{eq:ctrl_law}
\end{align}
\begin{theorem}[Closed-loop properties]
    Let \Cref{ass:terminal} hold. %
    Suppose the Problem~\labelcref{prob:mpc} is feasible at $t=0$.
    Then Problem~\labelcref{prob:mpc} is recursively feasible for $t\in \mathbb{N}$ and the closed-loop system \labelcref{eq:obs}, \labelcref{eq:dyn}, \labelcref{eq:ctrl_law} satisfies the chance constraints \labelcref{ineq:cons} for all $t\in \mathbb{N}$. 
    Moreover,  the asymptotic average cost satisfies:
\begin{align*}
    \lim_{T\rightarrow \infty}\frac{1}{T}\sum_{t=0}^{T-1}\E\left[c(x_t, u_t)\right] \leq \kappa_w\left(r_w + r_m\right) + \kappa_\epsilon\left(\sigma_\epsilon\right),
\end{align*}
where $\kappa_w,\kappa_\epsilon$ are $\mathcal{K}_\infty$ functions, $\mathcal{W}$ and $\mathcal{M}$ satisfy $\|w\|\leq r_w,\forall\, w\in \mathcal{W}$ and $\|m\|\leq r_m,\forall\, m\in \mathcal{M}$ respectively. We use $\sigma_\epsilon$ to denote the maximum eigenvalue of $\Sigma_\epsilon$.
\label{thr:mpc}
\end{theorem}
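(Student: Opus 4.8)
The plan is to adapt the standard robust/stochastic tube-MPC argument (as in \cite{hewing2020recursively, subgau_mpc, muntwiler2023lqg}) to the present setting where the error $e_t = [\hat e_t; \bar e_t]$ has been split into a bounded-bias part $e_t^b$ and a sub-Gaussian part $e_t^s$. The argument has three pieces: recursive feasibility, chance-constraint satisfaction, and the asymptotic average cost bound. First I would establish recursive feasibility by the usual shift-and-append candidate: given the optimal $v^*_{0:H-1|t}$ at time $t$, form the candidate input $v^{\mathrm{cand}}_{i|t+1} = v^*_{i+1|t}$ for $i=0,\dots,H-2$ and $v^{\mathrm{cand}}_{H-1|t+1} = K z_{H|t}$ (the terminal feedback on the nominal state). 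The key observations are: (i) the nominal dynamics \labelcref{eq:nominal} are deterministic so the shifted nominal trajectory is exactly reproduced, except the new $z_{0|t+1} = z_{t+1} = z_t + v^*_{0|t}$ which matches $z_{1|t}$; (ii) the tightened constraint sets $\mathcal{E}_{t+i}$ are monotone in the sense needed — the stochastic part $\mathcal{E}^\xi_t$ is generated by the monotone Lyapunov recursion for $\Sigma_t$ (converging to a fixed point since $A^e$ is Schur), and the bias part $\mathcal{F}^\xi_t = K_e \mathcal{F}^b_t$ likewise grows monotonically to a bounded limit set; (iii) the terminal set $\mathcal{Z}_f$ is positively invariant and satisfies the tightened constraint for all $t$ by \Cref{ass:terminal}, so appending $K z_{H|t}$ keeps the terminal step feasible and $(A+BK)z_{H|t}\in\mathcal{Z}_f$. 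One must check that $\mathcal{E}_{t+1+i}\supseteq\mathcal{E}_{t+i}$ is not actually required in the wrong direction; rather, the candidate at time $t+1$ uses constraint sets $\mathcal{E}_{(t+1)+i} = \mathcal{E}_{t+(i+1)}$, which are exactly the sets the original plan already satisfied at the shifted indices — so feasibility transfers directly, and only the freshly appended terminal step needs \Cref{ass:terminal}. This closes recursive feasibility by induction from the $t=0$ hypothesis.

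Second, for the chance constraints I would invoke the PRS construction already assembled in \Cref{sec:MPC}: since the closed-loop control law \labelcref{eq:ctrl_law} is exactly the MPC feedback used to derive the error dynamics \labelcref{eq:err_dyn}, the decomposition \labelcref{eq:err_s}--\labelcref{eq:err_sep} holds along the closed loop. By \cite[Thm.~1]{subgau_mpc} the variance proxy of $e^s_t$ obeys the stated Lyapunov recursion, so $\Sigma_t$ is well-defined; by \cite[Thm.~2]{subgau_mpc} the ellipsoid $\mathcal{E}^\xi_t$ with the $\kappa^{-1}$ radius satisfies $\mathrm{Pr}\{\xi^s_t\in\mathcal{E}^\xi_t\}\ge 1-\delta$; and $\xi^b_t\in\mathcal{F}^\xi_t$ deterministically from the bounded-set recursion. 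Hence $\mathrm{Pr}\{\xi_t\in\mathcal{E}^\xi_t\oplus\mathcal{F}^\xi_t\}\ge 1-\delta$, i.e. $\mathrm{Pr}\{[x_t;u_t]\in[z_t;v_t]\oplus\mathcal{E}_t\}\ge 1-\delta$. Recursive feasibility guarantees $[z_t;v_t]\in(\mathcal{X}\times\mathcal{U})\ominus\mathcal{E}_t$ at every step (with $v_t = v^*_{0|t}$), so $[z_t;v_t]\oplus\mathcal{E}_t\subseteq\mathcal{X}\times\mathcal{U}$, and the chance constraint \labelcref{ineq:cons} follows for all $t\in\mathbb{N}$.

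Third, for the average-cost bound I would follow the ISS-style Lyapunov argument of \cite{subgau_mpc, muntwiler2023lqg}. Let $V(t)$ denote the optimal value of Problem~\labelcref{prob:mpc} at time $t$, a function of $(\hat x_t, z_t)$. Using the shifted candidate from the feasibility step and the terminal cost $c_f(x)=\tfrac12\|x\|_P^2$ with $P$ solving the Lyapunov equation for $A+BK$, the standard telescoping gives $\E[V(t+1)\mid\mathcal{F}_t] \le V(t) - c(\bar x_t, u^{\mathrm{ce}}_t) + (\text{terms from the appended step and the noise injected into }\hat x_{t+1}, z_{t+1})$. The disturbance enters $z_{t+1}=z_t+v^*_{0|t}$ deterministically but enters $\hat x_{t+1}$ through $L(y_{t+1}-\hat x_t - u_t) = L(Ce_{t+1}^{\text{pre}} + m_{t+1}+\epsilon_{t+1})$-type terms; bounding these with $\|w\|\le r_w$, $\|m\|\le r_m$, and $\sigma_\epsilon = \lambda_{\max}(\Sigma_\epsilon)$, taking expectations, summing $t=0,\dots,T-1$, dividing by $T$, and letting $T\to\infty$ collapses the telescoping sum and leaves $\lim_T \tfrac1T\sum \E[c(x_t,u_t)] \le \kappa_w(r_w+r_m) + \kappa_\epsilon(\sigma_\epsilon)$ for suitable $\mathcal{K}_\infty$ functions $\kappa_w,\kappa_\epsilon$ — the separation into two additive $\mathcal{K}_\infty$ terms comes precisely from the bias/stochastic split, where $r_w+r_m$ captures the bounded inputs $w,m$ and $\sigma_\epsilon$ the sub-Gaussian part. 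A subtlety is passing from the per-stage cost on the certainty-equivalent state $\bar x_t$ to the true cost $\E[c(x_t,u_t)]$: this uses $x_t = \bar x_t + (\text{error})$, expands the quadratic, and absorbs the cross and error-variance terms into the $\mathcal{K}_\infty$ bounds via $\E[\|e_t\|^2]\le\mathrm{tr}(\Sigma_t) + (\text{bias bound})^2$, with $\mathrm{tr}(\Sigma_t)$ uniformly bounded because $A^e$ is Schur.

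The main obstacle I expect is the cost bound, specifically getting clean \emph{additive} $\mathcal{K}_\infty$ dependence $\kappa_w(r_w+r_m)+\kappa_\epsilon(\sigma_\epsilon)$ rather than a single joint bound: one must carefully track how the bounded disturbances $w_t, m_{t+1}$ and the sub-Gaussian $\epsilon_{t+1}$ propagate separately through the value-function decrease inequality, using the linearity of the error recursion \labelcref{eq:err_dyn} and the fact that the two components never interact (they are added at the end via \labelcref{eq:err_sep}). Recursive feasibility is comparatively routine given \Cref{ass:terminal}, and the chance-constraint step is essentially a bookkeeping assembly of \cite[Thms.~1,2]{subgau_mpc} with the bias set — the only care needed there is confirming that the PRS sets $\mathcal{E}_t$ used to tighten the constraints in the MPC at time $t$ are the \emph{same} sets appearing in the closed-loop probabilistic guarantee, which holds by construction since both are driven by the identical recursions for $\Sigma_t$ and $\mathcal{F}^b_t$.
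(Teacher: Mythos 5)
Your proposal is correct and follows essentially the same route as the paper, whose proof simply defers to the argument of \cite[Thm.~3]{subgau_mpc} with the PRS replaced by the biased version $\mathcal{E}_t = \mathcal{E}^\xi_t \oplus \mathcal{F}^\xi_t$: shift-and-append for recursive feasibility (correctly noting that the constraints \labelcref{eq:mpc_cons} involve only the deterministic nominal trajectory, so feasibility transfers under the index shift with \Cref{ass:terminal} covering the appended step), the Minkowski-sum assembly of the sub-Gaussian confidence set and the deterministic bias set for the chance constraints, and the ISS-style value-function telescoping with the bounded terms $w_t, m_{t+1}$ contributing the extra $\kappa_w(r_w+r_m)$. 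Your write-up is in fact more detailed than the paper's own proof sketch.
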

\begin{proof}
Recursive feasibility can be shown by directly substituting the PRS in \cite[Thm. 3]{subgau_mpc} with our PRS $\mathcal{E}_t$.
The proof for asymptotic stability can follow the proof of \cite[Thm. 3]{subgau_mpc}, which justifies the term $\kappa_\epsilon\left(\sigma_\epsilon\right)$ for zero-mean sub-Gaussian noise. The additional terms $\kappa_w\left(r_w+r_m\right)$ introduced by non-zero mean can be derived similarly following the proof of \cite[Thm. 3]{subgau_mpc}.
\end{proof}

\Cref{thr:mpc} states that our controller preserves the \emph{recursive feasibility} and \emph{closed-loop constraints satisfaction}, even though our noise has additional bias terms compared to \cite{subgau_mpc}.

\subsection{Low-level Control}
\label{sec:low_ctrl}
In our setting, the high-level end-effector motions planned by the MPC are executed using a low-level controller of robot B.
We adopt the differential inverse kinematic (DIK) controller and proportional-derivative (PD) controller in Orbit (\cite{mittal2023orbit}) for joint-level control.
Specifically, the DIK controller is first used to compute the desired joint position $q^{des}_t$ by:
\begin{align}
    q_t^{des} = IK\left( _U^W\hat{T}_t\cdot _V^UT_t^*\cdot _S^VT\cdot f^{-1}(\hat{x}_t + k_u u_t)\right)
\end{align}
where $k_u$ is a tunable constant, $IK$ is the inverse kinematic solver that maps $_D^WT_t$ to joint angles of robot B.
We then compute a desired torque given the Jacobian of the robot $J^e$ to compensate for the disturbance force ($\hat{f}^e_t$) measured from the force sensor:
\begin{align*}
    \tau^{des}_t = -{J^e}^\top_t\hat{f}^e_t
\end{align*}
The final actuator torque is solved by the PD controler:
\begin{align*}
    \tau_t^a = k_p(q^{des}_t - q_t) - k_d \dot{q}_t + \tau^{des}_t
\end{align*}
where $k_p$ and $k_d$ are the stiffness and damping of the PD controller, $q_t,\dot{q}_t$ are the joint angles and angular velocities.
The detailed parameters for both high-level and low-level controllers are shown in \Cref{tab:params}.

\begin{table}[t]
\centering
\caption{High-level and low-level controllers parameters.
}
\begin{tabular}{cccccccccc}
\hline
$c_x$ & $c_y$ & $c_z$ & $L$ & $Q$  \\
\cmidrule(lr){1-1}\cmidrule(lr){2-2}\cmidrule(lr){3-3}\cmidrule(lr){4-4}\cmidrule(lr){5-5}
0.01 & 0.2 & 0.2 & 0.99 & $\mathrm{diag}{[100,100,100,10,10]}$ \\ 
\hline
$R$ & $k_u$ & $k_p$ & $k_d$ & $\bar{u}$\\
\cmidrule(lr){1-1}\cmidrule(lr){2-2}\cmidrule(lr){3-3}\cmidrule(lr){4-4}\cmidrule(lr){5-5}
0.1 & 0.9 & 40 & 8 & $\mathrm{diag}{[0.01,0.005,0.005,0.2,0.2]}$\\
\hline
\end{tabular}
\label{tab:params}
\end{table}

\section{EXPERIMENTS}

\subsection{Simulation with Isaac Sim}
\label{sec:sim}
We construct a robotic spinal surgery environment in Orbit simulation platform \cite{mittal2023orbit} based on the system setup described in \Cref{sec:sys}, as is shown in \Cref{fig:sim}.
Both robot arms use the model of KUKA LBR Med 14 R820.
The drill is set with a diameter of $4[\mathrm{mm}]$.
The 3D ultrasound probe is configured similarly to the specifications of the XL14-3 matrix transducer (Philips, Amsterdam, NL), with volume size $50[mm]\times 30[mm] \times 70[mm]$ (width$\times$elevation$\times$depth) and pixel size $0.3[mm]$ (lower resolution for efficiency).
Ultrasound images are simulated in real time using the efficient convolution-based approach detailed in \cite{salehi2015patient}.
We use the ViP3 human model dataset \cite{gosselin2014development} from the ITIS foundation to simulate the patient, which contains segmented anatomies from real MRI data, including vertebra, nerves, and muscle. 

To simulate the realistic breathing motion of the patient, we use the spinal vertical motion data collected in the in-vivo experiment on a porcine model.
We apply the collected vertical movements to the whole human model during the simulation, assuming negligible relative motion between the vertebra and the soft tissue surrounding it.
We also include random disturbance force within a maximum range in the simulation.
The range is configured based on the drilling force data collected in \cite{li2024ultrasound}, with an example of a force curve shown in \Cref{fig:sim}.
Both the force disturbance and breathing motion contribute to $w_t$ in \Cref{eq:dyn}.

\subsection{Experiment Design}
\label{sec:exp_design}
\textbf{Uncertainty Quantification}
Since the set $\mathcal{M},\mathcal{W}$ and $\Sigma_\epsilon$ are not given, we estimate them using simulated data samples.
Specifically, we generate $N=100$ trajectories with length $T$ from the same initial nominal states to the desired screw placement $\{s_{0:T}^i\}_{i=1}^N:=\{x^i_{0:T}, y^i_{0:T},u^i_{0:T}\}_{i=1}^N$, and separate them into 2 groups $\mathcal{G}_1:=\{s^i_{0:T_1}\}_{i=1}^N$ and $\mathcal{G}_2:=\{s^i_{T_1:T}\}_{i=1}^N$, corresponding to states outside the vertebra and states inside the vertebra.
The goal is to estimate the largest $\mathcal{W},\Sigma_\epsilon, \mathcal{M}$ that are valid for all states inside each group.
To do this, we divide each group $\mathcal{G}_j,j=1,2$ into $l$ consecutive segments in time order as $\mathcal{G}_j^1,\mathcal{G}_j^2,...,\mathcal{G}_j^l$.
Under the assumption that within each segment, noise distributions are nearly identical between states, we estimate $\mathcal{M}_j^k,\mathcal{W}_j^k,\Sigma_{\epsilon j}^k$ using the data from each segment $\mathcal{G}_j^k$,
where $\mathcal{M}_j^k,\mathcal{W}_j^k$ are bounds of means of noise samples. 
Estimation of $\hat{\Sigma}_{\epsilon j}^k$ follows \cite{subgau_mpc}, based on the sub-Gaussian definition.
Then the common noise bias bound $\mathcal{M}_j,\mathcal{W}_j$ and variance proxies $\hat{\Sigma}_{\epsilon j}$ for each group are computed by taking the union and maximum over segments, respectively.

\begin{figure*}[t]
\centering
\includegraphics[width=0.99\textwidth]{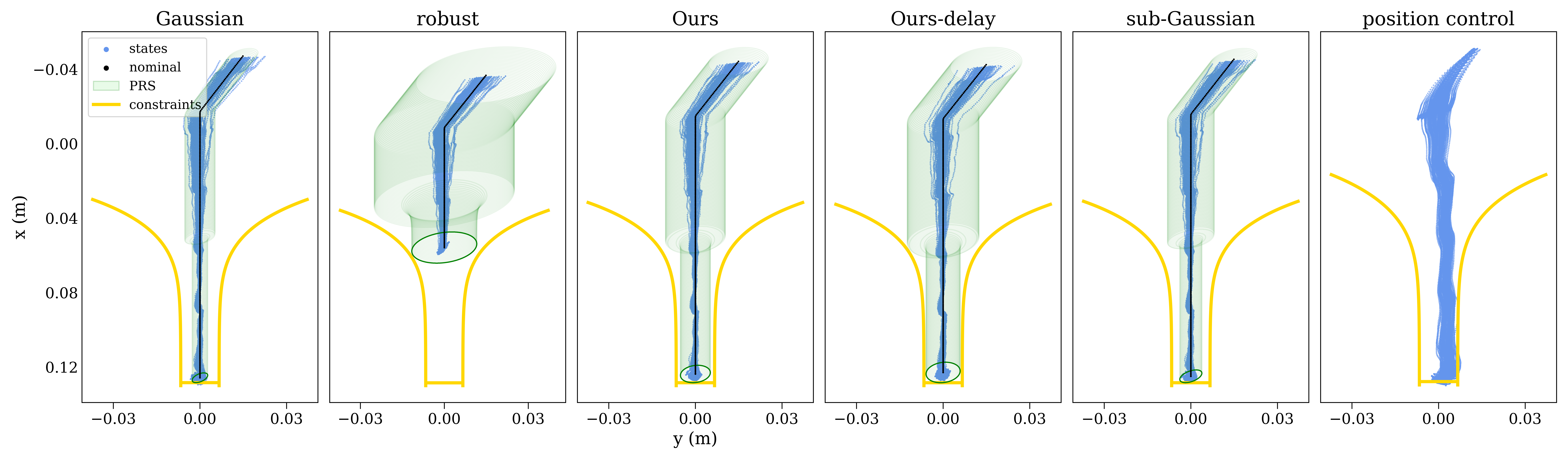}
\caption{Trajectories and PRS in $\{S\}$ frame from different approaches, projected in first 2 dimensions.
We choose $\delta=0.01$.
The true states vibrate around the nominal trajectory in a wave-like form due to the periodic breathing motion.
We use two sets of parameters for noise outside and inside the vertebra respectively, resulting in PRS with 2 levels of sizes, as is mentioned in \Cref{sec:exp_design}.
The robust approach failed to reach the desired screw placement due to too large PRS.
}
\label{fig:mpc}
\end{figure*}

\begin{table*}[t]
\centering
\caption{Evaluation of safety performance of different approaches with $\delta=0.01$ over 100 testing trajectories.
GR A and GR B denote the percentage of no lateral breach and breach of 0-2 [mm], respectively.
The signed distance represents the signed distance of the drill outside the pedicle region.  
MCP and ACP represent minimum and averaged containment probability of the PRS.
We also report the intersection over union (IOU) between the cylinders representing the final drill pose and the desired screw placement. 
Underlined values denote violation of safety constraints or user-specified probability.
}
\begin{tabular}{ccccccccc}
\hline
Methods       & Feasible?  & GR A {[}$\%${]} & GR B {[}$\%${]} & Signed distance {[}$mm${]} & Break ratio {[}$\%${]} & MCP {[}$\%${]} & ACP {[}$\%${]} & IOU [$\%$]\\ \hline
\textbf{Ours} &Y & 33.53 & 66.47 & 0.20 $\pm$ 0.49 & 0.00 & 100.00 & 100.00 & 73.19 \\ 
Ours-delay &Y & 28.95 & 71.05 & 0.26 $\pm$ 0.50 & 0.00 & 100.00 & 100.00 & 71.55 \\ 
sub-Gaussian MPC &Y & 33.96 & 66.04 & 0.19 $\pm$ 0.49 & 0.11 & \underline{84.00} & 99.27 & 74.85 \\ 
Gaussian MPC &Y & 34.29 & 65.71 & 0.19 $\pm$ 0.49 & \underline{4.39} & \underline{23.00} & 91.99 & 76.41 \\
robust MPC &N & - & - & - & 0.00 & 100.00 & 100.00 & - \\ 
position control &Y& \underline{8.93} & \underline{78.91} & 1.22 $\pm$ 0.76 & \underline{6.47} & - & - & 66.50 \\  \hline
\end{tabular}
\label{tab:mpc}
\end{table*}

\textbf{Baselines}
Our approach is compared against robust~\cite{mayne2006robust}, zero-mean sub-Gaussian stochastic~\cite{subgau_mpc}, 
 and Gaussian stochastic MPC~\cite{muntwiler2023lqg} methods.
 Our baseline approaches also include closed-loop classical position control, which are widely adopted by existing works~\cite{li2024ultrasound}.
The position control method directly provides end-effector position commands with constant feeding speed toward the goal position estimated from registration.
These position commands are tracked by the PD controller described in \Cref{sec:low_ctrl}.
We also introduce a variant of our approach with low-frequency (3.3Hz) real-time US volume feedback, to investigate the robustness of our method against signal delay.
This frequency is achievable with existing 3D ultrasound devices like Philips XL14-3.
For all stochastic MPC methods, we set $\delta=0.01$.
All MPC approaches are implemented based on the open source code of~\cite{safeexplore2025}.

\textbf{Metrics} 
In the presented simulation results, each method is evaluated with $N=100$ testing trajectories containing $\{x^i_{0:T},z^i_{0:T},u^i_{0:T},v^i_{0:T}, \mathcal{E}_{0:T}^i\}_{i=1}^N$, where $\mathcal{E}_{0:T}^i$ are total error confidence sets as described in \Cref{sec:MPC}. 
To comprehensively evaluate our PSP pipeline, we employed several metrics that enable assessment of \textit{precision} and \textit{safety}. 
To assess \textit{safety}, we measured (i) the average and maximum containment probability (ACP and MCP) of PRS, (ii) the average signed distance outside the bone, (iii) the break ratio, and (iv), the Gertzbein and Robbins (GR, \cite{gertzbein1990accuracy}) grading system, which is the clinic standard due to its comprehensive evaluation of screw placement \cite{review_PSP_metric_1, review_PSP_metric_2, review_PSP_metric_3}. 
The ACP and MCP are defined as
\begin{align*}
    & ACP:=\frac{1}{N(T+1)}\sum_{i=1}^N\sum_{t=0}^T\mathbb{I}(\begin{bmatrix}
     x^i_t - z^i_t \\
     u^i_t-v^i_t
 \end{bmatrix}\in \mathcal{E}_t^i) \\
& MCP:= \max_{t=0}^T \sum_{i=1}^N \mathbb{I}(\begin{bmatrix}
     x^i_t - z^i_t \\
     u^i_t-v^i_t
 \end{bmatrix}\in \mathcal{E}_t^i)
\end{align*}
The GR system categorizes final screw positions into grades A (fully contained within pedicle) through E (severe misplacement) in increment of 2 mm.
To assess \textit{precision}, we assessed (i) the average position error and its standard deviation, (ii) the average angular error and its standard deviation, and (iii) the cylindrical overlap between the PSP and the ground truth.

\subsection{Results}
\label{sec:exp_results}

\textbf{Uncertainty propagation and constraint satisfaction}
The results of uncertainty propagation and constraint satisfaction are detailed in \Cref{tab:mpc}.
The ACP and MCP of our PRS satisfy the user-specified value, in contrast to the Gaussian and zero-mean sub-Gaussian approaches.
This validates the effectiveness of our uncertainty propagation and quantification methods.
While the MCP of robust approach satisfies the criteria, it fails to find feasible solutions due to the conservatism, as is shown in \Cref{fig:mpc}.
As for clinical metrics, our approach also achieves GR classification better than $B$ and no breaking through for all testing trajectories.
Our GR A ratio is similar to MPC baselines and much better than the classical position control.

\begin{figure}[t]
\centering
\includegraphics[width=0.5\textwidth]{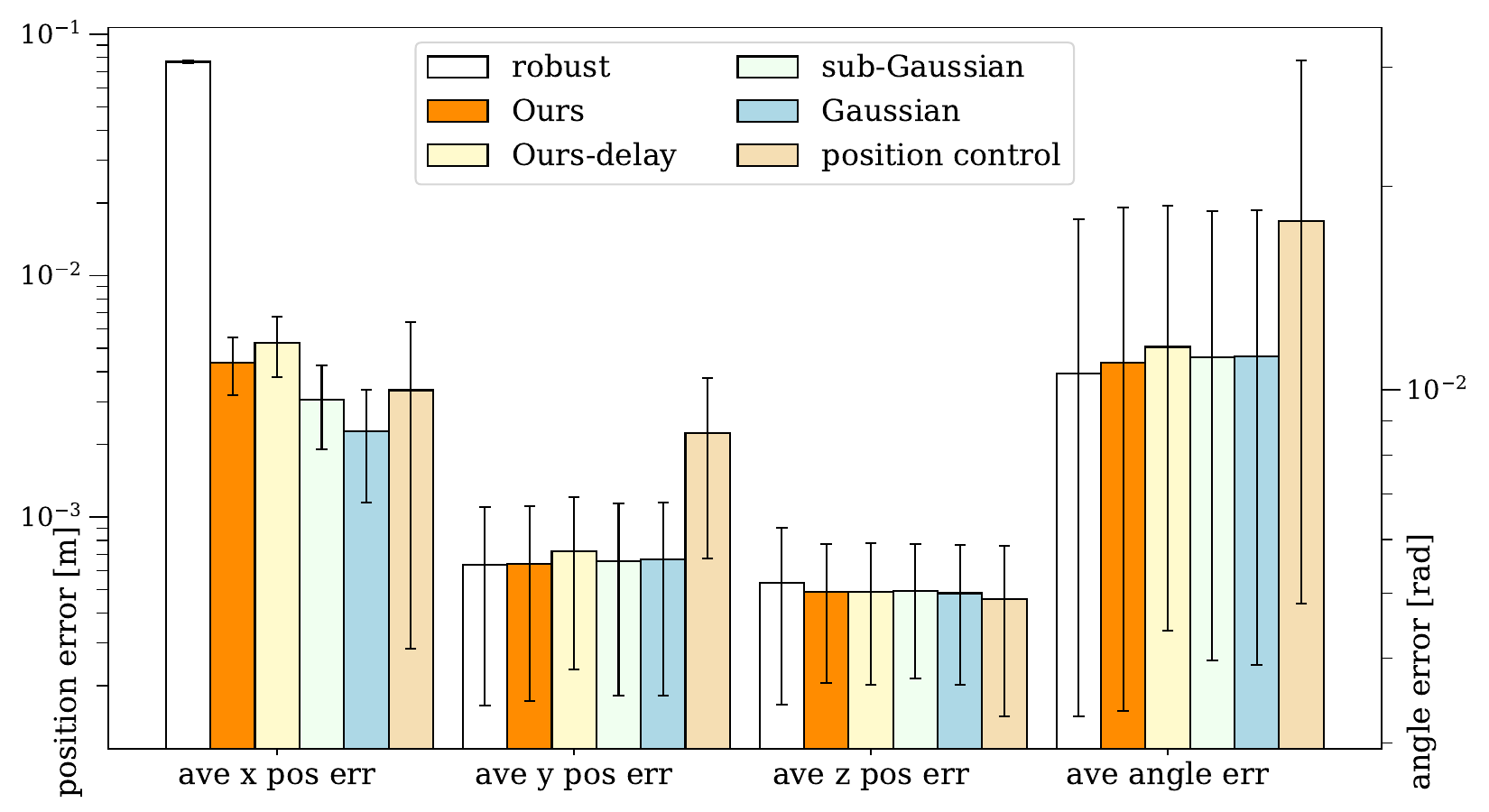}
\caption{Drilling precision of different approaches.
We compare the final pose of the drill $\{D\}$ to the true pose of desired screw placement $\{S\}$.
Specifically, we show averaged absolute position errors along $xyz$ axes of $\{S\}$ frame and the angle between $x$ axes of $\{D\}$ and $\{S\}$ frames.
The standard deviation of each error are also annotated as error bars.
}
\label{fig:precision}
\end{figure}

\textbf{Drilling precision}
The drilling precision results are illustrated in \Cref{fig:precision}.
Our approach provides sub-millimeter lateral position tracking error and less than 1 degree of angle tracking error in our simulation.
The lower frequency of US volume feedback (3.3Hz) results in larger PRS and slightly worse tracking in the drilling direction, but feasible solutions with comparable safe rates are still achievable.
Besides, all stochastic MPC approaches achieve higher precision for horizontal position and angle tracking than classical position control. 
However, our precision in the drilling direction ($x$) is lower due to constraint tightening to prevent drilling from breaking through.
For the same reason, our cylinder overlap ratio (\Cref{tab:mpc}) is also slightly lower than stochastic MPC methods (Gaussian and sub-Gaussian).
In contrast, stochastic MPC and classical positional control methods give smaller tracking errors but higher breaking-through probability, as is shown in \Cref{tab:mpc}.

\section{CONCLUSION}

In this work, we proposed an MPC framework with closed-loop guarantees for chance constraint satisfaction under noise with bounded bias and sub-Gaussian randomness.
We extend the existing sub-Gaussian MPC approach \cite{subgau_mpc} by deriving the uncertainty propagation for biased sub-Gaussian noise.
We developed a simulation environment for ultrasound-guided robotic spinal surgery using Orbit, additionally integrating ultrasound simulation from \cite{salehi2015patient} and ViP3 human-model dataset \cite{gosselin2014development}.
Breathing motion data and drilling force from in-vivo experiments are incorporated to achieve a realistic simulation.
Evaluation results demonstrate the capability of our approach to achieve high clinical performance while satisfying the safety constraints in the simulation.
One limitation of this work is approximating the breathing motion from a porcine experiment, but the data is comparable to humans.
Another limitation is simulating the patient model as a rigid body.
Interesting future directions include the soft tissue deformation simulation, sim-to-real transfer, and collaborated planning for the dual-arm system.

\section{Other Ethics Statements}
Ethics approval has been granted for our study by the Swiss Federal Food Safety and Veterinary Office under Ethical Application Number N°36440  Cantonal N° ZH003/2024.

\section{Acknowledgement}
This work is part of the "Learn to learn safely"project funded by a grant of the Hasler foundation (grant nr: 21039).
We acknowledge ITIS Foundation for providing the virtual population dataset.

\printbibliography

\end{document}